\def\eqref#1{equation~\ref{#1}}
\def\1{\bm{1}}
\DeclareMathAlphabet{\mathsfit}{\encodingdefault}{\sfdefault}{m}{sl}
\SetMathAlphabet{\mathsfit}{bold}{\encodingdefault}{\sfdefault}{bx}{n}
\newcommand{\E}{\mathbb{E}}
\newcommand{\Var}{\mathrm{Var}}
\newcommand{\Cov}{\mathrm{Cov}}
\definecolor{myorange}{RGB}{2, 142, 2}
\newcommand{\ie}{\emph{i.e., }}
\newcommand{\eg}{\emph{e.g., }}
\newcommand{\cf}{\emph{cf. }}
\theoremstyle{definition}
\theoremstyle{remark}
\definecolor{-}{rgb}{0.25,0.41,0.88}
\definecolor{+}{rgb}{0.70,0.13,0.13}
\definecolor{lightred}{rgb}{0.94, 0.5, 0.5}
\definecolor{codegreen}{rgb}{0,0.3,0.6}
\definecolor{codegray}{rgb}{0.5,0.5,0.5}
\definecolor{codepurple}{rgb}{0.58,0,0.82}
\definecolor{backcolour}{rgb}{0.95,0.95,0.92}
\newcommand{\method}{QAE}
\definecolor{mylavendar}{RGB}{215,131,255}
\definecolor{myblue}{RGB}{0, 150, 255}
\definecolor{mylightblue}{RGB}{118, 214, 255}
\definecolor{mygreen}{RGB}{115, 250, 121}
\definecolor{myred}{RGB}{255, 38, 0}
\newcommand{\revised}[1]{{}}
\lstdefinestyle{mystyle}{
    basicstyle=\tiny,
    commentstyle=\color{codegreen},
    keywordstyle=\color{magenta},
    numberstyle=\tiny\color{codegray},
    stringstyle=\color{codepurple},
    basicstyle=\fontsize{8.5}{9}\selectfont\ttfamily,
    breakatwhitespace=false,         
    breaklines=true,                 
    captionpos=b,                    
    keepspaces=true,                 
    numbers=left,                    
    numbersep=5pt,                  
    showspaces=false,                
    showstringspaces=false,
    frame = single
}
\title{Quantile Advantage Estimation: Stabilizing RLVR for LLM Reasoning}
\author{Junkang Wu$^{1}$~~Kexin Huang$^{1}$~~Jiancan Wu$^{1}$\thanks{Jiancan Wu and Xiang Wang are the corresponding authors.}~~An Zhang$^{1}$ \\
\textbf{Xiang Wang$^{1}$\footnotemark[1]~~Xiangnan He$^{2}$}\\
  $^{1}$University of Science and Technology of China\\
  $^{2}$MoE Key Lab of BIPC, University of Science and Technology of China\\
  \texttt{\{jkwu0909, wujcan, xiangnanhe\}@gmail.com}\\ 
}
\begin{document}

\maketitle

\begin{abstract}
Reinforcement Learning with Verifiable Rewards (RLVR) strengthens LLM reasoning but training often oscillates between {entropy collapse} and {entropy explosion}.
We trace both hazards to the mean-baseline used in value-free RL (\eg GRPO \& DAPO), which improperly penalizes negative-advantage samples under reward outliers.
We propose {Quantile Advantage Estimation} (QAE), replacing the mean with a group-wise $K$-quantile baseline.
QAE induces a response-level, two-regime gate: on hard queries ($p \le 1{-}K$) it reinforces rare successes, while on easy queries ($p > 1{-}K$) it targets remaining failures.
Under first-order softmax updates, we prove {two-sided entropy safety}, giving lower/upper bounds on one-step entropy change that curb explosion and prevent collapse.
Empirically, this minimal modification stabilizes entropy, sparsifies credit assignment (with tuned $K$, roughly 80\% of responses receive zero advantage), and yields sustained pass@1 gains on \revised{Qwen3-8B/14B-Base and Qwen3-30B-A3B} across AIME'24/'25 and AMC'23.
These results identify {baseline design}—rather than token-level heuristics—as the primary mechanism for scaling RLVR \footnote{The code is available at \url{https://github.com/junkangwu/QAE}.}.

\end{abstract}

\section{Introduction}

Reinforcement Learning with Verifiable Rewards (RLVR) \citep{tulu3, R1, qwen3} enhances Large Language Models (LLMs) by rewarding verifiable correctness \citep{hle, gpqa}. Yet reward-driven optimization often triggers \emph{entropy collapse} \citep{dapo, Mechanism_of_RLVR}: the policy distribution sharpens prematurely, suppressing exploration and ultimately limiting performance. This exposes a fundamental tension between maximizing reward and preserving policy diversity during RLVR fine-tuning.

Prior work focuses almost exclusively on preventing collapse, \eg uplifting low-probability tokens \citep{dapo}, penalizing collapse-inducing tokens \citep{Mechanism_of_RLVR}, or preserving policy diversity by primarily learning from negative samples \citep{psr}.
While effective at avoiding collapse, these methods address only one side of the problem and largely overlook its symmetric counterpart: \emph{entropy explosion}. Uncontrolled entropy growth is equally harmful, leading to inefficient exploration and stalled progress.

This risk is practical, not merely theoretical. On Qwen3-8B-Base with DAPO, Figure~\ref{fig:teaser} (left) shows that \texttt{Clip-Higher} averts collapse but induces an early entropy spike (steps $10\to80$) that, while not immediately harming performance, creates long-term instability.
After step $100$, entropy remains high and volatile, while performance plateaus. 
These dynamics highlight key shortcomings of unconstrained entropy growth: (i) higher policy entropy does not guarantee continued effective exploration—performance can plateau despite ongoing behavioral variability reflected in high entropy; and (ii) the initial entropy spike indicates a period of over-exploration that, though not immediately destructive, ultimately undermines the model's ability to consolidate learning from high-reward reasoning trajectories. The dual challenge, therefore, is to avoid both premature convergence (collapse) and unproductive, signal-degrading divergence (explosion). Merely avoiding collapse is therefore insufficient—effective RLVR requires keeping entropy within a productive range.

We address this dual challenge with \textbf{Quantile Advantage Estimation (QAE)}, which dynamically regulates policy entropy by replacing the conventional mean reward baseline with a group-wise $K$-quantile. 
The key idea is that the baseline choice controls how many samples receive positive vs. negative advantages, which directly impacts exploration behavior.
Specifically, a lower $K$ marks more samples as having positive advantage, encouraging the model to exploit these successful patterns and reducing entropy. Conversely, a higher $K$ makes fewer samples appear successful, pushing the model to diversify its behavior patterns, thereby increasing entropy.
By tuning the quantile parameter $K$, we can control the exploration-exploitation balance. As shown in Figure~\ref{fig:teaser} (right), with an appropriately chosen $K$, this mechanism steers training toward a stable entropy regime — neither collapsing nor exploding — enabling sustained performance gains beyond the prior plateau.
This mechanism has a striking empirical consequence: \textbf{it naturally sparsifies updates}. With a tuned $K$, roughly 80\% of responses receive zero advantage.
This concentrates computational effort
on the most informative samples and revealing a deep redundancy in standard mean-baseline approaches. 

We trace both early entropy spikes and late plateaus to the mean-baseline in value-free RL; substituting a $K$-quantile baseline (QAE) implements a response-level gate that routes updates to rare successes on hard queries and to remaining failures on easy ones. We prove a two-sided entropy safety guarantee and derive a discriminative objective that explains the observed stability, which leads to significant pass@1 gains and solid pass@16 performance. 
Empirically, the one-line swap boosts \texttt{Clip-Higher} \citep{dapo} on \textsc{Qwen3-8B/14B-Base}, pairs well with \texttt{Clip-Cov}/\texttt{KL-Cov} \citep{Mechanism_of_RLVR} on \textsc{Qwen3-8B-Base}, and works with \texttt{GSPO} \citep{gspo} on \textsc{Qwen3-30B-A3B-Base}, yielding consistent pass@1 gains and strong pass@16 on \textsc{AIME'24}, \textsc{AIME'25}, and \textsc{AMC'23}.
Overall, QAE reframes entropy regulation as a \textbf{baseline-design} problem rather than a \textbf{token-level} tuning problem.

\begin{figure}[t!] 
    \centering 
    \includegraphics[width=\columnwidth]{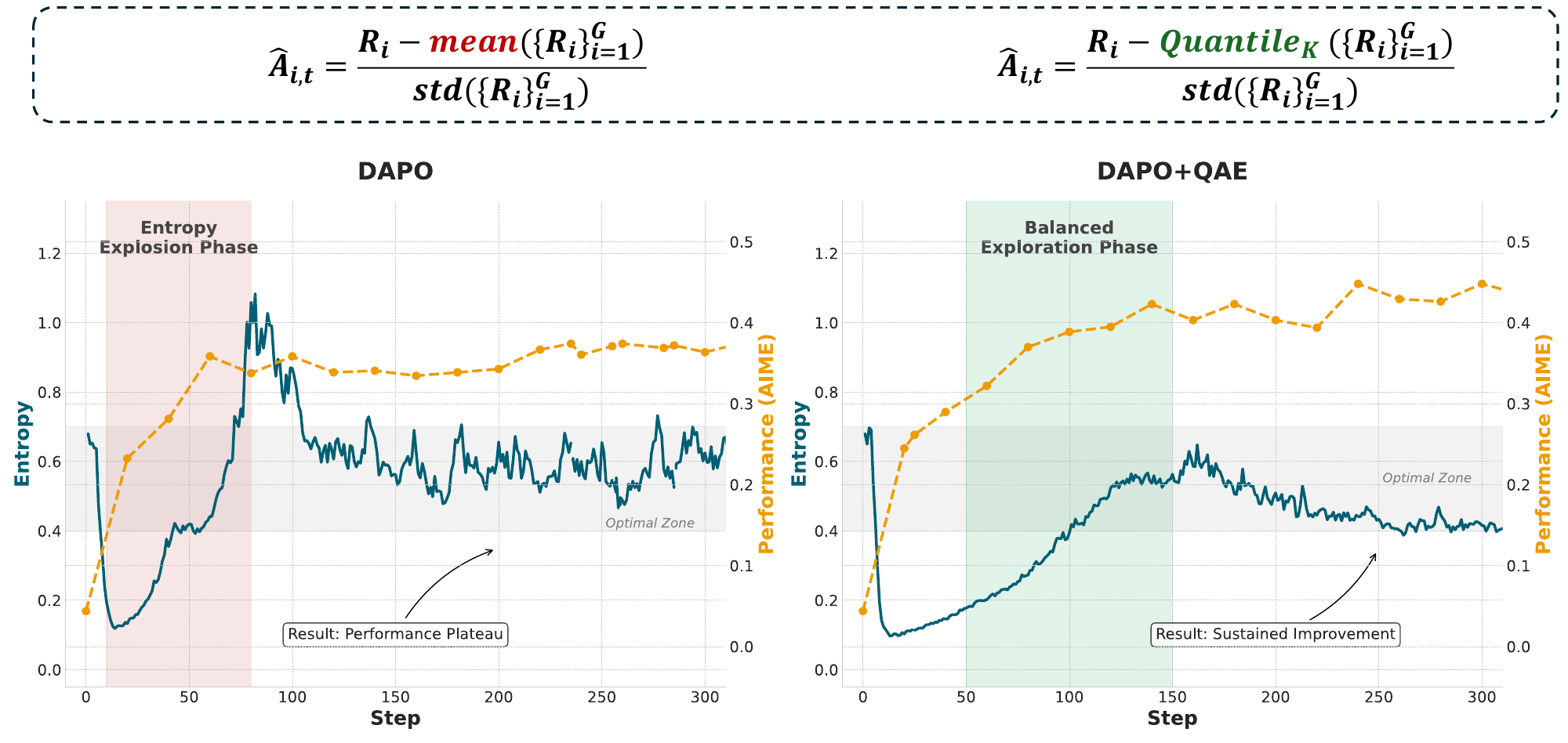}
    \caption{
    Entropy–performance dynamics on Qwen3-8B-Base. Left: DAPO with \texttt{Clip-Higher} prevents early collapse but triggers an early entropy spike (steps 10–80) and a later performance plateau. Right: our quantile baseline (QAE) stabilizes policy entropy and sustains pass@1 gains by steering training into a balanced exploration regime.
        }
    \label{fig:teaser} 
\end{figure}

\section{Preliminaries}
\label{sec:preliminaries}

In this section, we review the policy optimization algorithms that form the foundation of our work, starting with Proximal Policy Optimization (PPO) and its value-free variants, GRPO and DAPO.

\paragraph{Proximal Policy Optimization (PPO)}
PPO~\citep{ppo} is a foundational on-policy algorithm that stabilizes training by constraining policy updates to a trust region around the previous policy $\pi_{\boldsymbol{\theta}_{\text{old}}}$. It maximizes a clipped surrogate objective:
\begin{equation}
  \mathcal{J}_{\text{PPO}}(\theta) = \mathbb{E}_{(\boldsymbol{q}, \boldsymbol{a}) \sim \mathcal{D}, \boldsymbol{o} \sim \pi_{\theta_{\text{old}}}(\cdot \mid \boldsymbol{q})} \left[ \min\left(r_t(\theta) \hat{A}_t, \, \text{clip}(r_t(\theta), 1 - \epsilon, 1 + \epsilon) \hat{A}_t \right) \right],
\end{equation}
where $r_t(\boldsymbol{\theta}) = \frac{\pi_{\boldsymbol{\theta}}(o_t | \boldsymbol{q}, \boldsymbol{o}_{<t})}{\pi_{\boldsymbol{\theta}_{\text{old}}}(o_t | \boldsymbol{q}, \boldsymbol{o}_{<t})}$ is the probability ratio. The advantage $\hat{A}_t$ is typically estimated by a value network, and $\epsilon$ is the clipping hyperparameter (\eg $0.2$).

\paragraph{Group Relative Policy Optimization (GRPO)}
To eliminate the need for a value network, GRPO~\citep{grpo} adapts the PPO objective by proposing a relative advantage estimator. For each query, GRPO samples a group of $G$ responses $\{\boldsymbol{o}_i\}_{i=1}^{G}$ from $\pi_{\boldsymbol{\theta}_{\text{old}}}$. Each response is assigned a binary reward $R_i$ based on its correctness against a ground-truth answer $\boldsymbol{a}$. The advantage for the $i$-th sample is then estimated by normalizing its reward against the group's statistics:
\begin{equation}
  \hat{A}_i = \frac{R_i - \text{mean}(\{R_k\}_{k=1}^{G})}{\text{std}(\{R_k\}_{k=1}^{G})}, \quad \text{where}\ \  R_i = \begin{cases}
    1.0 & \text{if }\ \  \texttt{is\_equivalent}(\boldsymbol{a}, \boldsymbol{o}_i), \\
    0.0 & \text{otherwise}.
    \end{cases}
    \label{eq:grpo_advantage}
\end{equation}
GRPO further incorporates a KL divergence penalty against \revised{$\pi_{{\text{ref}}}$} to regularize the policy update.

\paragraph{Dynamic Sampling Policy Optimization (DAPO)}
We use DAPO~\citep{dapo}, a state-of-the-art value-free method, as our baseline. DAPO refines GRPO with several key modifications. It removes the KL penalty but introduces an asymmetric clipping range $(1 - \epsilon_{\text{low}}, 1 + \epsilon_{\text{high}})$, allowing larger updates for advantageous actions. The objective is also normalized at the token level:
\begin{align*}
\mathcal{J}_{\text{DAPO}}(\theta) = & 
\mathbb{E}_{\substack{
    (\boldsymbol{q}, \boldsymbol{a}) \sim \mathcal{D}, \\
    \{\boldsymbol{o}_i\}_{i=1}^{G} \sim \pi_{\theta_{\text{old}}}(\cdot \mid \boldsymbol{q})
}}
\Bigg[
\frac{1}{Z}
\sum_{i=1}^{G} \sum_{t=1}^{|\boldsymbol{o}_i|} 
\min \bigg( 
r_{i,t}(\theta) \hat{A}_{i,t},
  \text{clip}\big(r_{i,t}(\theta),
1 - \epsilon_{\text{low}},  1 + \epsilon_{\text{high}} \big) \hat{A}_{i,t} 
\bigg)
\Bigg]
\end{align*}
where $Z = \sum_{i=1}^{G} |\boldsymbol{o}_i|$ is the total number of tokens in the group, and the advantage $\hat{A}_{t,i}$ is computed as in GRPO. Crucially, DAPO employs a dynamic sampling constraint:
$$0 < \left| \left\{ \boldsymbol{o}_i \mid \texttt{is\_equivalent}(\boldsymbol{a}, \boldsymbol{o}_i) \right\} \right| < G.$$
This ensures that each training batch contains both positive and negative examples, guaranteeing a meaningful advantage signal and stable gradients.

\section{The Entropy Dilemma in RL Scaling: From Collapse to Explosion}
\label{sec:entropy-dilemma}

Policy entropy is central to reinforcement learning, governing the exploration--exploitation trade-off. This balance is especially fragile in RLVR for large models. When entropy is too low, the policy converges prematurely to suboptimal behaviors (\emph{entropy collapse}); when it is too high, uncontrolled stochasticity attenuates learning signals (\emph{entropy explosion}). Navigating this entropy dilemma is therefore pivotal for scaling RLVR.

\subsection{The Two Perils of Policy Entropy}

\paragraph{Entropy collapse.}
Well documented in RLVR \citep{dapo, Mechanism_of_RLVR, psr}, collapse occurs when the policy becomes overly deterministic too early. The resulting loss of exploration traps training in narrow reasoning modes and limits generalization.

\paragraph{Entropy explosion.}
At the other extreme, the policy becomes overly stochastic: gradients are swamped by noise, credit assignment deteriorates, and learning turns unstable and inefficient—an equally limiting regime that has been comparatively underexplored \citep{ahmed2019entropy, geist2019regularized, haarnoja2018sac, xu2021tes, zhang2025maxent}.

\paragraph{The dilemma.}
Most prior work targets collapse alone. Treating it as the sole bottleneck is a critical oversight: in practice, mitigating collapse with existing techniques can inadvertently induce explosion. Addressing only one side is insufficient; effective RLVR requires keeping policy entropy within a productive, stable range. We next analyze the mechanisms that drive entropy explosion and motivate our remedy.

\subsection{An Analysis of Entropy Explosion in RLVR}
\label{sec:entropy_analysis}

\begin{figure}[t!]
    \centering
    \includegraphics[width=\columnwidth]{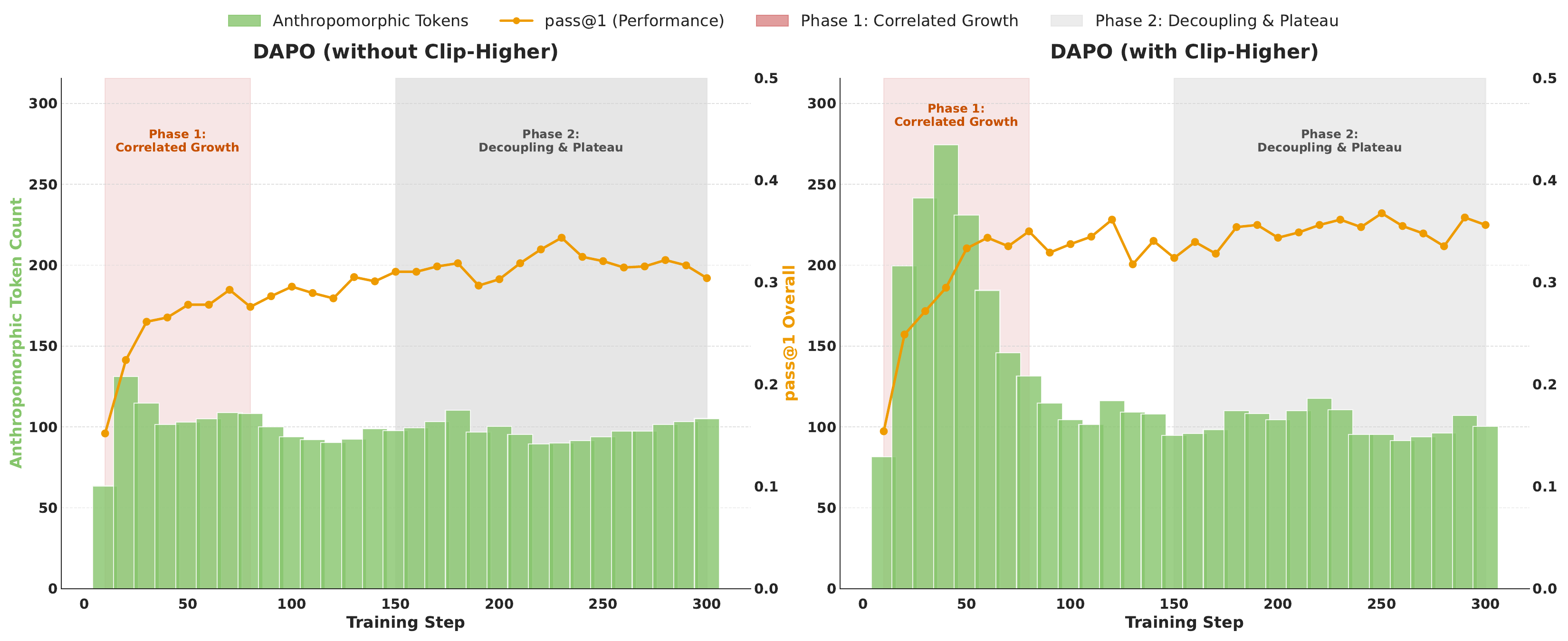}
    \caption{
    \textbf{DAPO training dynamics on Qwen3--8B.}
    \emph{Left}: without \texttt{Clip-Higher}; \emph{Right}: with \texttt{Clip-Higher}.
    In both settings we observe two phases---an early \emph{correlated growth} between anthropomorphic token frequency and pass@1, followed by a \emph{decoupling then plateau}.
    While \texttt{Clip-Higher} averts collapse, it does not prevent the later performance stall.
    }
    \label{fig:token_count}
\end{figure}



To investigate the drivers of entropy explosion, we analyze a prevalent class of value-free RL methods that apply policy gradients at the \emph{token level}. We use DAPO~\citep{dapo} as a representative case, focusing on its \texttt{Clip-Higher} mechanism---a token-level control designed to prevent entropy collapse but, as we will show, one that also illustrates the pitfalls of fine-grained control. Unless otherwise noted, we follow the recommended configurations in \citet{dapo}; full details appear in Appendix~\ref{app:impelment}.

\begin{figure}[t!]
    \centering
    \includegraphics[width=\columnwidth]{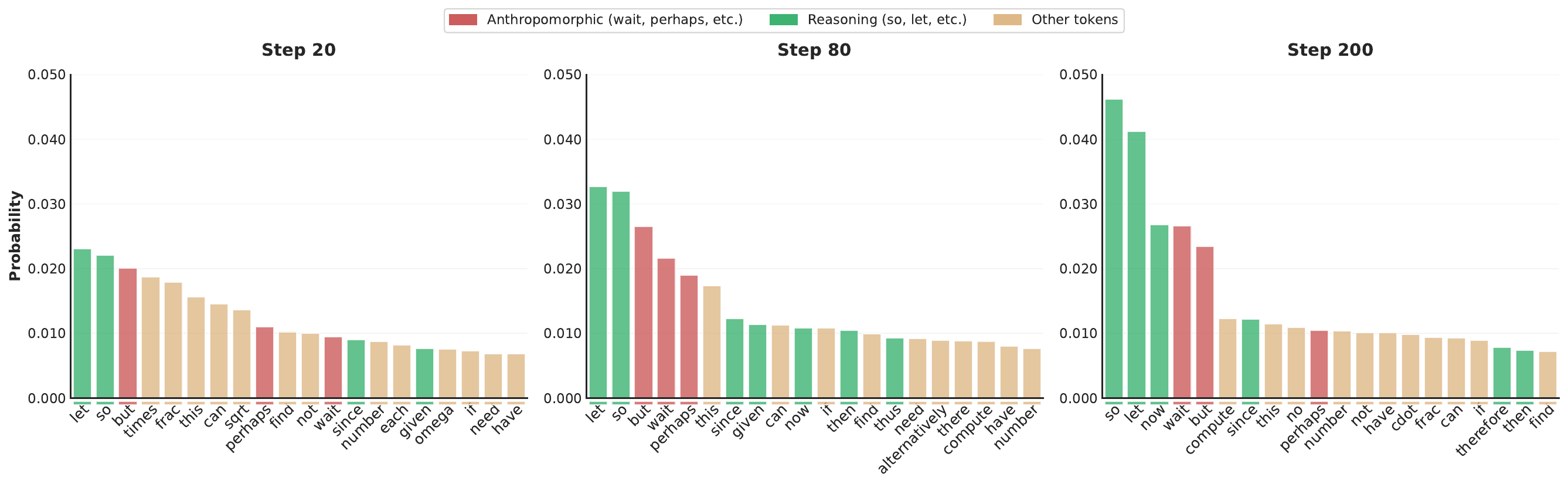}
    \caption{
    \textbf{Evolution of high-entropy token usage under DAPO (steps 20/80/200).}
    Early training exhibits diverse anthropomorphic tokens (\eg \textit{wait}, \textit{perhaps}); by steps 80--200 the distribution homogenizes around rigid reasoning templates (\eg \textit{so}, \textit{let}), indicating reduced exploratory diversity consistent with entropy explosion.
    }
    \label{fig:entropy_tokens}
    \vspace{-20pt} 
\end{figure}




\paragraph{Observation 1: Token-level control does not guarantee sustained reasoning gains.}
In Figure~\ref{fig:token_count}, \texttt{Clip-Higher} triggers an early spike (steps 20--80) in anthropomorphic tokens---proposed by \citet{aha_moments} as markers of ``aha-moment'' reasoning---that coincides with sharp pass@1 gains.
However, after step 150, anthropomorphic token frequency returns toward baseline while performance plateaus.
\revised{Thus, although \texttt{Clip-Higher} mitigates early collapse, its rapid escalation is coupled with an entropy explosion, which is correlated with the observed limitations in scaling.}

\paragraph{Observation 2: Token-level control yields homogenized, low-quality exploration.}
To probe the stall, we examine the distribution of high-entropy tokens at steps 20, 80, and 200 (\cf Figure~\ref{fig:entropy_tokens}).
Early in training, diverse markers such as \textit{wait} and \textit{perhaps} are frequent.
By step 80, usage concentrates on assertive, formulaic tokens like \textit{so} and \textit{let}.
This convergence reflects a loss of diversity in high-entropy states: the model increasingly relies on rigid reasoning templates rather than exploring alternatives, aligning with the observed plateau.


\definecolor{-}{rgb}{0.25,0.41,0.88}
\definecolor{+}{rgb}{0.70,0.13,0.13}
\begin{wraptable}{r}{0.22\textwidth}
    \vspace{-32pt}
    \centering
    \caption{Different $\epsilon_{\text{high}}$ values in DAPO.}
    \vspace{-10pt}
    \begin{tabular}{ll}
    \toprule
    $\epsilon_{\text{high}}$ & \textbf{AIME24} \\
    \midrule
    $0.20$ & $32.29^{\color{-}-18.6\%}$ \\
    $0.22$ & $34.90^{\color{-}-12.1\%}$ \\
    $0.24$ & $34.17^{\color{-}-13.9\%}$ \\
    $0.26$ & $40.63^{\color{+}+2.4\%}$ \\
    \midrule
    $0.28$ & $39.69$ \\ 
    \bottomrule
    \end{tabular}
    \label{tab:diff_high}
    \vspace{-30pt}
\end{wraptable}

\paragraph{Observation 3: Entropy explosion is disproportionately driven by negative-advantage samples.}
We decompose entropy dynamics by sample advantage, where positive-advantage samples contribute positive updates and negative-advantage samples contribute non-positive updates.
As shown in Figure~\ref{fig:grad_comp} (Left), entropy growth is dominated by negative-advantage samples, which show both the steepest increase and the largest share of entropy early in training.
Positive-advantage samples remain comparatively stable.
This imbalance indicates over-exploration induced by negative-advantage samples in the early phase, followed by insufficient exploitation later.

\paragraph{Observation 4: Tuning token-level hyperparameters is insufficient.}
One might lower the token-level high clip threshold $\epsilon_{\mathrm{high}}$ to curb update magnitude.
Table~\ref{tab:diff_high} (varying $\epsilon_{\mathrm{high}}$ from $0.20$ to $0.28$) shows only marginal effects: performance peaks near $\epsilon_{\mathrm{high}}=0.26$, but the overall improvement is limited and the late-stage plateau persists.
Simply adjusting token-level clipping cannot resolve the core exploration--exploitation tension.

\begin{tcolorbox}[takeawaysbox]
Our analysis indicates that fine-grained, token-level controls provide a temporary fix with notable side effects:
\begin{itemize}[leftmargin=*]
    \item They prevent \textbf{entropy collapse} but can inadvertently induce a performance-limiting \textbf{entropy explosion}.
    \item The explosion is mechanically rooted in the \textbf{advantage baseline}, which systematically mishandles \textbf{negative-advantage samples} under reward outliers.
    \item The issue is therefore a \textbf{baseline-design flaw}, not a hyperparameter tuning problem at the token level.
\end{itemize}
\end{tcolorbox}

\begin{figure}[t!]
    \centering
    \includegraphics[width=\columnwidth]{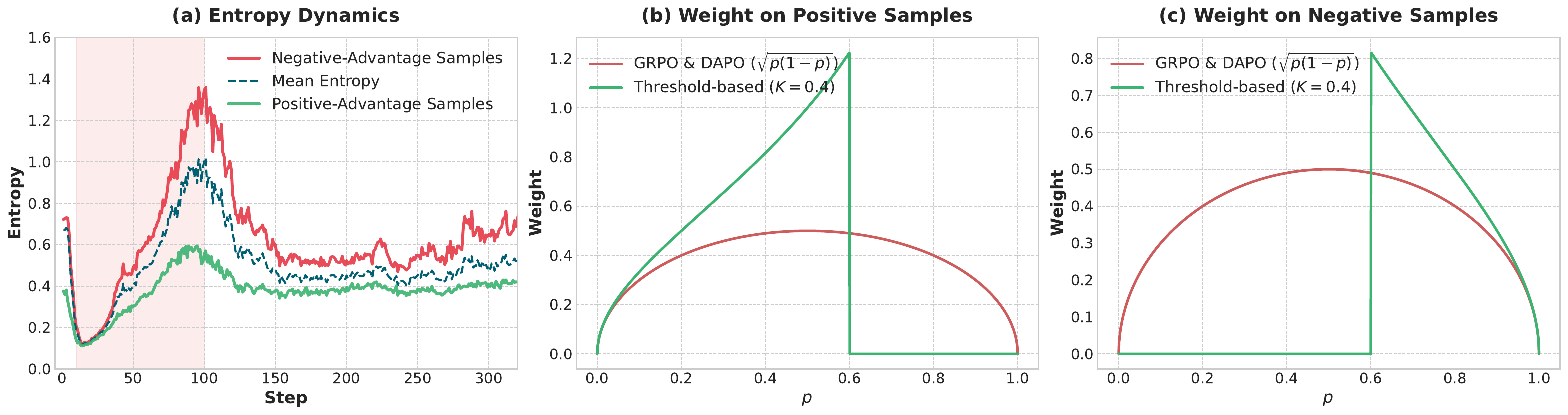}
    \caption{\textbf{Quantile baseline reshapes weighting and entropy dynamics.}
    \emph{Left}: policy entropy over training split by advantage sign—negative-advantage samples drive the surge.
    \emph{Middle/Right}: query-level weights vs.\ success rate $p$; GRPO \& DAPO use symmetric $\sqrt{p(1-p)}$ weighting, whereas our method applies a thresholded scheme ($K\!=\!0.4$).
    }
    \label{fig:grad_comp}
    \vspace{-6pt}
\end{figure}

\section{Method: Quantile-Based Advantage Estimation for Entropy Regulation}
\label{sec:method}
Building on the analysis in Section~\ref{sec:entropy-dilemma}, we identify the \emph{advantage baseline} as the primary source of instability in RLVR. Value-free methods such as GRPO~\citep{grpo} and DAPO~\citep{dapo} use an empirical \emph{mean} baseline that is sensitive to reward outliers: a few high-reward samples can inflate the baseline, turning otherwise competent responses into negative-advantage examples and penalizing useful exploration, which induces entropy collapse.

We address this by \emph{quantile-based advantage estimation}. Replacing the mean with a distributional quantile yields a baseline that is (i) statistically robust and (ii) explicitly \emph{controllable}. A single hyperparameter $K\!\in\!(0,1)$ shifts the update focus between exploration and exploitation.

\subsection{Formulation and Intuition}
\label{sec:formulation}
For a query $q$, sample $G$ responses $\{(o_i,R_i)\}_{i=1}^G$ with $o_i\!\sim\!\pi_{\mathrm{old}}(\cdot\mid q)$ and binary rewards $R_i\!\in\!\{0,1\}$. Let
\[
p(q)\;:=\;\frac{1}{G}\sum_{i=1}^G R_i
\]
be the empirical success rate under $\pi_{\mathrm{old}}$. Define the group empirical CDF
\[
\widehat F_q(x)\;:=\;\frac{1}{G}\sum_{j=1}^G \mathbf{1}\{R_j\le x\},
\]
and the (right-continuous) $K$-quantile baseline
\[
b_K(q)\;:=\;\mathsf{Q}_K(\{R_j\}_{j=1}^G)\;=\;\inf\{x:\widehat F_q(x)\ge K\},\qquad K\in(0,1).
\]
We then define the standardized advantage
\begin{equation}
\label{eq:quantile-adv}
\hat A_i \;=\; \frac{R_i - b_K(q)}{\text{std}(\{R_j\}_{j=1}^{G}) + \varepsilon},
\qquad
\varepsilon>0,
\end{equation}
where $\varepsilon$ prevents division by zero when $p\!\in\!\{0,1\}$. For binary rewards, the baseline reduces to a threshold on $p(q)$:
\begin{equation}
\label{eq:bern-quantile}
b_K(q) \;=\;
\begin{cases}
0, & p(q) \le 1{-}K,\\[2pt]
1, & p(q) > 1{-}K.
\end{cases}
\end{equation}
This yields two regimes governed by the difficulty threshold $1{-}K$:
\begin{itemize}[leftmargin=*]
\item \textbf{Hard (exploitation-focused), $p(q)\le 1{-}K$.} The baseline is $0$. Incorrect responses ($R\!=\!0$) have $\hat A\!=\!0$, while rare correct responses ($R\!=\!1$) receive $\hat A\!>\!0$, reinforcing nascent successful trajectories.
\item \textbf{Easy (exploration-focused), $p(q)> 1{-}K$.} The baseline is $1$. Correct responses have $\hat A\!=\!0$, while remaining failures ($R\!=\!0$) yield $\hat A\!<\!0$, discouraging residual failure modes on already-solved queries.
\end{itemize}
Hence $K$ acts as a direct lever that regulates policy entropy by switching updates between rare successes (hard) and remaining failures (easy).

\subsection{Gradient Analysis}
\label{sec:gradient-analysis}

We adopt the discriminative perspective of GRPO introduced by DisCO~\citep{li2025disco}, which separates a query-level weight from a discriminative term. Let $\pi_{\mathrm{old}}^+(\cdot\mid q)$ and $\pi_{\mathrm{old}}^-(\cdot\mid q)$ denote the conditional distributions of responses with rewards $1$ and $0$, respectively. For a response $o$, let $s_\theta^+(o,q)$ and $s_\theta^-(o,q)$ denote score functions based on token-normalized policy ratios for positive/negative examples (see Appendix~\ref{app:prop} for exact forms).

\paragraph{GRPO revisited.}
\citet{li2025disco} show that the GRPO objective can be written as
\begin{equation}
\label{eq:grpo-reduced}
\mathcal{J}_{\mathrm{GRPO}}(\theta)
\;=\;
\mathbb{E}_q\!\Big[
\underbrace{\sqrt{p(q)\bigl(1-p(q)\bigr)}}_{\text{query weight}}
\cdot
\underbrace{\mathbb{E}_{\substack{o\sim\pi_{\mathrm{old}}^+,\; o'\sim\pi_{\mathrm{old}}^-}}
\!\bigl[s_\theta^+(o,q) - s_\theta^-(o',q)\bigr]}_{\text{discriminative term}}
\Big],
\end{equation}
with a symmetric weight that down-weights both very easy and very hard queries (\cf Fig.~\ref{fig:grad_comp}).

\paragraph{Quantile-based objective.}
Under Eqs.~\ref{eq:quantile-adv}–\ref{eq:bern-quantile}, the standardized advantage is non-zero on \emph{only one} outcome type per regime. Substituting into a GRPO-style objective yields:

\begin{restatable}[Quantile-regulated objective]{proposition}{gradientanalysis}
\label{prop:qae-objective}
Assume binary rewards, group size $G\!\ge\!2$, and the right-continuous empirical quantile. Using the standardized advantage in Eqs.~\ref{eq:quantile-adv}–\ref{eq:bern-quantile}, the learning objective is (up to a constant factor depending on $\varepsilon$) equivalent to
\begin{align}
\label{eq:J-quantile}
\mathcal{J}_{\mathrm{Quantile}}(\theta)
=\;
\mathbb{E}_q\Big[
&\mathbf{1}\{p(q)\le 1{-}K\}\,\sqrt{\tfrac{p(q)}{1-p(q)}}\;\mathbb{E}_{o\sim\pi_{\mathrm{old}}^+(\cdot\mid q)} s_\theta^+(o,q)  \nonumber \\
\;-\;
&\mathbf{1}\{p(q)>1{-}K\}\,\sqrt{\tfrac{1-p(q)}{p(q)}}\;\mathbb{E}_{o'\sim\pi_{\mathrm{old}}^-(\cdot\mid q)} s_\theta^-(o',q)
\Big].
\end{align}
\end{restatable}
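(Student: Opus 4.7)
The proof reduces to a two-regime case analysis following the discriminative decomposition of \citet{li2025disco} once the quantile baseline has been substituted. First I would start from the GRPO-style objective with the standardized advantages $\hat A_i = (R_i - b_K(q))/(\sigma(q) + \varepsilon)$, where $\sigma(q) = \sqrt{p(q)(1-p(q))}$, and condition on the regime dictated by the binary-reward quantile formula in Eq.~\ref{eq:bern-quantile}: if $p(q) \le 1-K$ then $b_K(q) = 0$, and if $p(q) > 1-K$ then $b_K(q) = 1$. In each regime exactly one reward class becomes null: under $b_K=0$ the negatives have $\hat A_i = 0$, and under $b_K=1$ the positives have $\hat A_i = 0$, so only one of $s_\theta^+$ or $s_\theta^-$ contributes per query.

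Second, within the hard regime I would rewrite the group-level sum as a mixture over reward classes. Since the empirical fraction of positives is exactly $p(q)$, the group average $\frac{1}{G}\sum_i \hat A_i\, s_\theta(o_i,q)$ collapses to $\frac{p(q)}{\sigma(q)+\varepsilon}\,\mathbb{E}_{o \sim \pi_{\mathrm{old}}^+(\cdot\mid q)}[s_\theta^+(o,q)]$. The symmetric computation in the easy regime yields $-\frac{1-p(q)}{\sigma(q)+\varepsilon}\,\mathbb{E}_{o' \sim \pi_{\mathrm{old}}^-(\cdot\mid q)}[s_\theta^-(o',q)]$, where the minus sign comes from $R_i - 1 = -1$ on the active (negative) class. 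Taking $\varepsilon \to 0^+$ (or absorbing the bounded factor $\sigma(q)/(\sigma(q)+\varepsilon)$ into the ``constant factor depending on $\varepsilon$'' mentioned in the statement) simplifies the prefactors to $p(q)/\sigma(q) = \sqrt{p(q)/(1-p(q))}$ and $(1-p(q))/\sigma(q) = \sqrt{(1-p(q))/p(q)}$, which are precisely the regime-dependent query weights in Eq.~\ref{eq:J-quantile}. Taking $\mathbb{E}_q$ of the regime-indicator decomposition and combining the two terms yields the claimed identity, mirroring the structure of Eq.~\ref{eq:grpo-reduced} but with asymmetric, one-sided weights.

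Third, I would verify the degenerate edge cases $p(q)\in\{0,1\}$: when $p=0$ the hard-regime prefactor $\sqrt{p/(1-p)}$ vanishes and all advantages are $0$; when $p=1$ the easy-regime prefactor $\sqrt{(1-p)/p}$ vanishes and again all advantages are $0$. These degenerate groups are additionally excluded by DAPO's dynamic-sampling filter, so the identity holds vacuously there under the convention $0\cdot\infty := 0$.

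The main obstacle I expect is making the phrase \emph{``up to a constant factor depending on $\varepsilon$''} genuinely precise. The denominator $\sigma(q)+\varepsilon$ is \emph{query-dependent}, so the ratio $\sigma(q)/(\sigma(q)+\varepsilon)$ is not literally a global constant, and on queries with $p(q)$ very close to $0$ or $1$ the correction term $\varepsilon/(\sigma(q)+\varepsilon) = O(\varepsilon/\sqrt{p(q)(1-p(q))})$ can be non-negligible. The cleanest fix is to present Eq.~\ref{eq:J-quantile} as the $\varepsilon \to 0^+$ limit of the standardized objective on the support of queries surviving dynamic sampling, or equivalently to restate the proposition as a pointwise identity of integrands under the convention $0/0 := 0$. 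I would also need to verify that the token-level score definitions $s_\theta^\pm$ of Appendix~\ref{app:prop} are normalized so that the per-token weighting in the original GRPO/DAPO objective lines up cleanly with the conditional expectations $\mathbb{E}_{o\sim\pi_{\mathrm{old}}^\pm}$ used in Eq.~\ref{eq:J-quantile}, since any residual normalization discrepancy would otherwise be hidden inside the claimed ``constant factor.''
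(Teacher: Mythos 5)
Your proposal is correct and follows essentially the same route as the paper's proof: split on the regime of $p(q)$, observe that exactly one reward class has nonzero advantage, decompose the group expectation into class-conditional expectations with mixture weights $p$ and $1-p$, and cancel these against the advantage magnitude $1/\sqrt{p(1-p)}$ to obtain the asymmetric factors $\sqrt{p/(1-p)}$ and $\sqrt{(1-p)/p}$. The one step you gloss over---writing the surrogate as if it were linear in the advantage, $\tfrac{1}{G}\sum_i \hat A_i\, s_\theta(o_i,q)$---is handled in the paper via the positive homogeneity of the clipped surrogate, $f(x,c)=c\,f^+(x,1)$ for $c>0$ and $f(x,c)=|c|\,\bigl(-f^-(x,1)\bigr)$ for $c<0$, which is precisely the $s_\theta^{\pm}$ normalization check you flag at the end; your remark that the factor $\sigma(q)/(\sigma(q)+\varepsilon)$ is query-dependent rather than a global constant is a fair criticism that the paper's own proof silently elides by dropping $\varepsilon$.
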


\paragraph{Remark.} 
Please check Appendix~\ref{app:proof} for all proofs.
Compared to the GRPO objective in Eq.~\ref{eq:grpo-reduced}, QAE makes two crucial changes: (i) it selectively nullifies one of the discriminative terms based on query difficulty, and (ii) it replaces the symmetric, bell-shaped weight $\sqrt{p(1-p)}$ with asymmetric, monotonic factors—either $\sqrt{p/(1-p)}$ for hard queries or $\sqrt{(1-p)/p}$ for easy queries. This transforms the update mechanism from focusing on moderately difficult problems to amplifying signals from rare successes or residual failures (\cf Fig.~\ref{fig:grad_comp}).


\subsection{Theoretical Analysis: Two-Regime Entropy Safety}
\label{sec:theory_two_regime}

\paragraph{Setup.}
Adopt a bandit reduction in which producing a full response $y$ to $q$ is a single action. Let $\pi(\cdot\!\mid\!q)$ be the current softmax policy and $H(q)$ the token-averaged (length-normalized) policy entropy. Let $\widehat A$ denote the GRPO/DAPO-style token-normalized advantage (Sec.~\ref{sec:gradient-analysis}); more generally, write $A_b(y,q)=r(y,q)-b(q)$ for the response-level advantage with baseline $b(q)$. For binary rewards with group success rate $p(q)$, we use the right-continuous $K$-quantile baseline $b_K(q)$ (Eq.~\ref{eq:bern-quantile}), \ie $b_K(q)=0$ if $p(q)\le 1{-}K$ and $1$ otherwise. Under first-order logit updates of a softmax policy with step size $\eta>0$, the entropy–covariance identity (adapted from \citet{Mechanism_of_RLVR}) yields,
\[
\Delta H(q)\;\approx\;-\eta\;\mathrm{Cov}_{y\sim\pi(\cdot\mid q)}\!\bigl(\log\pi(y\mid q),\,\pi(y\mid q)\,A_b(y,q)\bigr),
\quad \eta>0.
\]

\paragraph{Baseline as a linear knob.}
For $b\!\in\![0,1]$, define $F_q(b):=\mathrm{Cov}_\pi\!\bigl(\log\pi,\,\pi\,(r-b)\bigr)$ for $r\!\in\!\{0,1\}$. By linearity,
\[
F_q(b)=F_q(0)-b\,\mathrm{Cov}_\pi(\log\pi,\pi),\qquad
\mathrm{Cov}_\pi(\log\pi,\pi)>0
\]
whenever $\pi(\cdot\mid q)$ is non-uniform. Hence $\Delta H(q;b)=-\eta\,F_q(b)$ is strictly increasing in $b\!\in\![0,1]$.

\begin{restatable}[Two-regime entropy safety of $K$-quantile]{proposition}{tworegime}
\label{prop:two_regime}
Fix $q$ and a non-uniform $\pi(\cdot\mid q)$. Then:
\begin{enumerate}[leftmargin=1.2em,itemsep=2pt,topsep=2pt]
\item \textbf{Low-success (explosion-proof).} If $p(q)\le 1{-}K$ so $b_K(q)=0$, then for any baseline $b\!\in\![0,1]$ (including the mean $b\!=\!p(q)$ or token-level clipping/KL that keep $b$ unchanged),
\[
\Delta H(q;b_K)\;\le\;\Delta H(q;b).
\]
\item \textbf{High-success (collapse-proof).} If $p(q)>1{-}K$ so $b_K(q)=1$, then for any $b\!\in\![0,1]$,
\[
\Delta H(q;b_K)\;\ge\;\Delta H(q;b).
\]
\end{enumerate}
\end{restatable}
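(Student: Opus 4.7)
The strategy is to reduce both claims to a single monotonicity statement for $\Delta H(q;b)$ as a function of the baseline $b\in[0,1]$. I begin from the identity supplied in the setup, $\Delta H(q;b)\approx-\eta F_q(b)$ with $F_q(b)=\mathrm{Cov}_\pi(\log\pi,\pi(r-b))$, and apply bilinearity of covariance in its second argument to split off the $b$-dependent piece: $F_q(b)=F_q(0)-b\,C$, where $C:=\mathrm{Cov}_\pi(\log\pi,\pi)$. This makes $\Delta H(q;b)=-\eta F_q(0)+\eta b\,C$ an affine function of $b$ whose slope is $\eta C$, so the whole result reduces to signing $C$ and then comparing $b$ against the endpoints $\{0,1\}$.

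The one substantive step is establishing $C>0$ for non-uniform $\pi(\cdot\mid q)$. I would view $\log\pi(Y\mid q)$ and $\pi(Y\mid q)$, under $Y\sim\pi(\cdot\mid q)$, as strictly increasing functions of the common random variable $\pi(Y\mid q)$. By the standard covariance inequality for comonotone functions (Chebyshev's sum inequality, or FKG on the real line), the covariance of two monotone nondecreasing functions of the same random variable is nonnegative, and strictly positive whenever that variable is non-degenerate. Non-uniformity of $\pi(\cdot\mid q)$ is exactly this non-degeneracy, yielding $C>0$ and hence $\Delta H(q;b)$ strictly increasing in $b\in[0,1]$.

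Given monotonicity, the two parts of the proposition follow by locating $b_K(q)$ at an endpoint of $[0,1]$. In the low-success regime $p(q)\le 1-K$ we have $b_K(q)=0$, the left endpoint, so for every admissible baseline $b\in[0,1]$ (including the GRPO/DAPO mean $b=p(q)$, and any token-level clipping/KL variant that leaves $b$ unchanged) one obtains $\Delta H(q;b_K)\le\Delta H(q;b)$, the explosion-proof bound. Symmetrically, in the high-success regime $p(q)>1-K$, $b_K(q)=1$ is the right endpoint and $\Delta H(q;b_K)\ge\Delta H(q;b)$ for every $b\in[0,1]$, which is the collapse-proof bound.

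The main obstacle is the positivity of $C$; once that is in hand the argument is essentially a one-line endpoint comparison on an affine function, so the remainder is routine. If the stricter inequalities are desired, the same monotonicity gives strictness as long as $p(q)<1-K$ (respectively $p(q)>1-K$) and $b\neq b_K$, since the slope $\eta C$ is then strictly positive.
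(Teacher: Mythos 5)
Your proposal is correct and follows essentially the same route as the paper's proof: the same bilinearity decomposition $F_q(b)=F_q(0)-b\,\mathrm{Cov}_\pi(\log\pi,\pi)$, the same positivity argument for the slope via comonotonicity of $u\mapsto\log u$ and $u\mapsto u$ (Chebyshev's sum inequality), and the same endpoint comparison on $[0,1]$. The only quibble is your strictness aside, which needs only $b\neq b_K(q)$ and non-uniformity, not the strict inequality $p(q)<1-K$.
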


\paragraph{Sequences vs.\ token-level controls.}
Existing token-level controls are \emph{one-sided}: they rescale step sizes but leave the response-level baseline $b(q)$ unchanged, so they cannot prevent explosion driven by negative-advantage samples. In contrast, the $K$-quantile baseline is \emph{two-sided} (Prop.~\ref{prop:two_regime}): $b_K\!=\!0$ when $p(q)\!\le\!1{-}K$ (explosion-proof) and $b_K\!=\!1$ when $p(q)\!>\!1{-}K$ (collapse-proof), matching the two training regimes in Fig.~\ref{fig:grad_comp}.

\begin{tcolorbox}[takeawaysbox]
\textbf{Method takeaways (QAE).}
\begin{itemize}[leftmargin=*]
  \item \textbf{$K$-quantile as a response-level gate.} A single parameter $K$ yields a deterministic switch (Eqs.~\ref{eq:quantile-adv}–\ref{eq:bern-quantile}): hard queries ($p(q)\!\le\!1{-}K$) update on \emph{rare successes} only; easy queries ($p(q)\!>\!1{-}K$) update on \emph{remaining failures} only (Fig.~\ref{fig:grad_comp}).
  \item \textbf{Two-sided entropy safety (provable).} Under first-order softmax updates, the $K$-quantile baseline attains the \emph{extremal} one-step entropy shift—minimal at $p(q)\!\le\!1{-}K$ (prevents explosion) and maximal at $p(q)\!>\!1{-}K$ (prevents collapse); see Prop.~\ref{prop:two_regime}.
\end{itemize}
\emph{Note:} Token-level mechanisms only rescale steps and do not change the response-level baseline, so they cannot realize these guarantees.
\end{tcolorbox}

\section{Experiments}
\label{sec:experiments}

\paragraph{Evaluation protocol.}
We evaluate on three standard math–reasoning benchmarks: \textbf{AIME'24}, \textbf{AIME'25}, and \textbf{AMC'23}.
All evaluations are \emph{zero-shot}. For each query we sample $k{=}32$ completions with temperature $T{=}0.7$.
We report {pass@1} and {pass@16} as accuracy metrics, together with the
average tokens per response. Unless noted, we keep all training and decoding hyper-parameters identical across
baselines and our method, changing only the \emph{response-level baseline} from the mean to a $K$-quantile (default $K{=}0.4$).
This value is chosen to robustly balance exploration and exploitation; we present a detailed sensitivity analysis on $K$ in Appendix~\ref{app:k-analysis}.

\subsection{Overall Performance across Models \& Recipes}
\label{sec:overall}

\textbf{Drop-in gains across model sizes.}
Table~\ref{tab:main} summarizes results on Qwen3-8B-Base and Qwen3-30B-A3B-Base. Replacing the mean baseline in DAPO with our K-quantile baseline (QAE) yields consistent pass@1 improvements across datasets and model sizes, while keeping pass@16 performance highly comparable. 
The stability of this process is further illustrated by the training dynamics curves for both 8B and 14B models in Appendix~\ref{app:8b_14b}, which show QAE consistently mitigates the entropy explosion seen in the baseline.

\textbf{Compatibility with strong recipes.}
QAE is orthogonal to token-level controls (\eg \textsc{Clip-Cov}, \textsc{KL-Cov}) and sequence-level optimization (\textsc{GSPO}). When layered on top of these methods, QAE consistently provides further gains without altering their hyper-parameters.

\begin{table}[t]
\caption{
    Overall performance on the AIME'24/'25 and AMC'23 benchmarks. Our drop-in QAE consistently improves pass@1 across different models and methods, while maintaining comparable pass@16 scores. 
    \textcolor{+}{Red} denotes an improvement and \textcolor{-}{blue} a decline.
}
  \label{tab:main}
  \centering
  \resizebox{\textwidth}{!}{
  \begin{tabular}{@{}clllllll@{}}
  \toprule
  \multirow{2}{*}{\textbf{Model}} & \multirow{2}{*}{\textbf{Method}} & \multicolumn{2}{c}{\textbf{AIME25}} & \multicolumn{2}{c}{\textbf{AIME24}} & \multicolumn{2}{c}{\textbf{AMC23 }}  \\ \cmidrule(l){3-8} 
   &  & Pass@1 & Pass@16 & Pass@1 & Pass@16 & Pass@1 & Pass@16  \\ \midrule
  \multirow{6}{*}{\begin{tabular}[c]{@{}c@{}}Qwen3-\\ 
  8B-Base\end{tabular}} & Clip-Higher  & 32.71 & 56.66 & 39.69 & 71.23 & 92.11 & 97.50  \\
   & \quad + \method{} &
      $34.90^{\scriptsize \color{+}+6.7\%}$ &
      $57.92^{\scriptsize \color{+}+2.2\%}$ &
      $48.23^{\scriptsize \color{+}+21.5\%}$ &
      $71.63^{\scriptsize \color{+}+0.6\%}$ &
      $92.97^{\scriptsize \color{+}+0.9\%}$ &
      $97.50^{\scriptsize \color{+}+0.0\%}$ \\
   \cmidrule(lr){2-8}
  & CLIP-Cov  & 33.02 & 52.27 & 42.40 & 68.58 & 87.42 & 96.25  \\
   & \quad + \method{} &
      $37.40^{\scriptsize \color{+}+13.3\%}$ &
      $56.29^{\scriptsize \color{+}+7.7\%}$ &
      $46.04^{\scriptsize \color{+}+8.6\%}$ &
      $73.16^{\scriptsize \color{+}+6.7\%}$ &
      $90.23^{\scriptsize \color{+}+3.2\%}$ &
      $96.25^{\scriptsize \color{+}+0.0\%}$ \\
   \cmidrule(lr){2-8}
   & KL-Cov &  33.33 & 45.86 & 44.90 & 73.00 & 86.02 & 95.00  \\
    & \quad + \method{} &
      $33.44^{\scriptsize \color{+}+0.3\%}$ &
      $51.62^{\scriptsize \color{+}+12.6\%}$ &
      $44.69^{\scriptsize \color{-}-0.5\%}$ &
      $77.08^{\scriptsize \color{+}+5.6\%}$ &
      $87.97^{\scriptsize \color{+}+2.3\%}$ &
      $96.25^{\scriptsize \color{+}+1.3\%}$ \\
    \midrule
  \multirow{2}{*}{\begin{tabular}[c]{@{}c@{}} Qwen3-30B-\\ A3B-Base\end{tabular}}
   & GSPO & 31.15 & 46.59 & 43.75 & 67.91 & 90.00 & 99.39\\
   & \quad + \method{} &
      $32.50^{\scriptsize \color{+}+4.3\%}$ &
      $48.01^{\scriptsize \color{+}+3.0\%}$ &
      $47.50^{\scriptsize \color{+}+8.6\%}$ &
      $71.72^{\scriptsize \color{+}+5.6\%}$ &
      $89.38^{\scriptsize \color{-}-0.7\%}$ &
      $97.21^{\scriptsize \color{-}-2.2\%}$ \\
   \bottomrule
  \end{tabular}
  }
\end{table}

\begin{figure}[t]
    \vspace{-10pt}
    \centering
    \includegraphics[width=\columnwidth]{./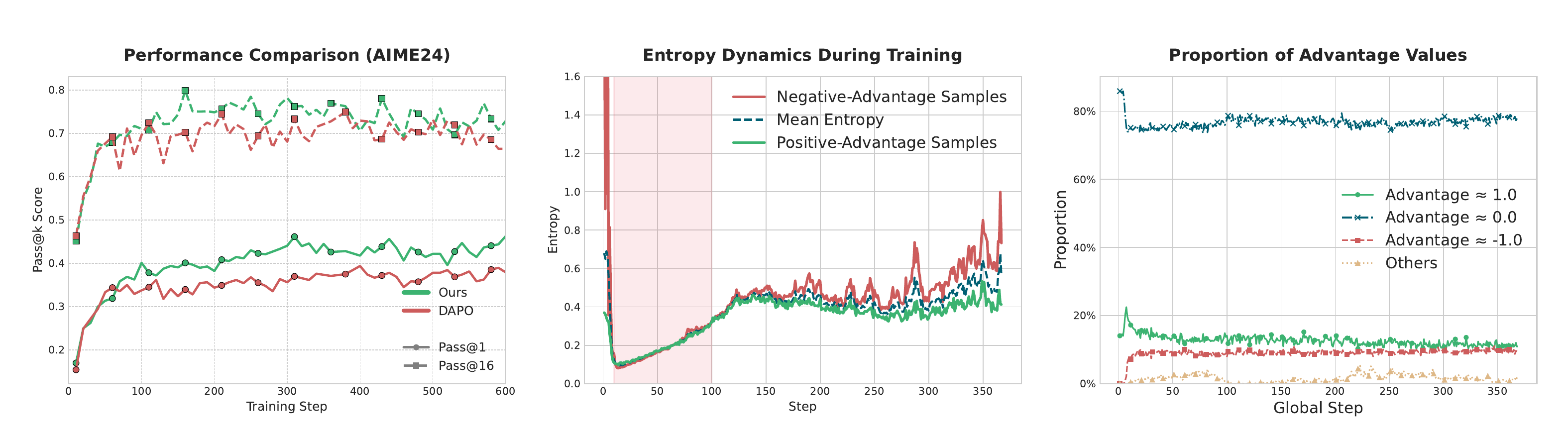}
    \caption{\textbf{Training dynamics and sparsity.}
    \textbf{(a)} AIME'24 (Qwen3--8B): 
    QAE boosts pass@1 while keeping pass@16 comparable—showing higher sample efficiency.
    \textbf{(b)} 
    Entropy by sign: DAPO’s explosion stems from negative-advantage samples; QAE suppresses it.
    \textbf{(c)} 
    Response sparsity: ~80\% responses have zero advantage, focusing updates on informative subsets.
    }
    \label{fig:combined}
    \vspace{-10pt}
\end{figure}

\subsection{Training Dynamics \& Entropy Safety}
\label{sec:dynamics}
\textbf{Pass@1 improves while pass@16 stays comparable.}
Figure~\ref{fig:combined} (\textbf{Left}) plots AIME'24 performance over training for Qwen3-8B-Base. From $\sim$step 100, DAPO exhibits an entropy surge and \emph{pass@1} stalls, while QAE maintains stable training and continues to improve. \emph{Pass@16} remains similar, \revised{reinforcing the interpretation of improved sample efficiency.}

\textbf{Negative-advantage entropy is the driver of instability.}
Figure~\ref{fig:combined} (\textbf{Middle}) decomposes entropy by the sign of the advantage. The growth is dominated by \emph{negative-advantage} samples; QAE suppresses this component and keeps the overall entropy within a productive range. This behavior follows directly from using a quantile baseline that down-weights uninformative negatives.

\textbf{Response-level sparsity: the 80/20 rule.}
Figure~\ref{fig:combined} (\textbf{Right}) shows that $\approx$80\% of sampled responses have \emph{zero} advantage throughout training. This ``response-level $80/20$ rule'' focuses updates on the informative minority, explaining QAE's stability and efficiency. 
In contrast to the baseline, which leads to homogenized exploration (Sec.~\ref{sec:entropy_analysis}), QAE sustains a productive co-growth of diverse exploratory tokens and reasoning accuracy, as detailed in Appendix~\ref{app:more_exp}.


\begin{figure}[t]
    \centering
    \includegraphics[width=\columnwidth]{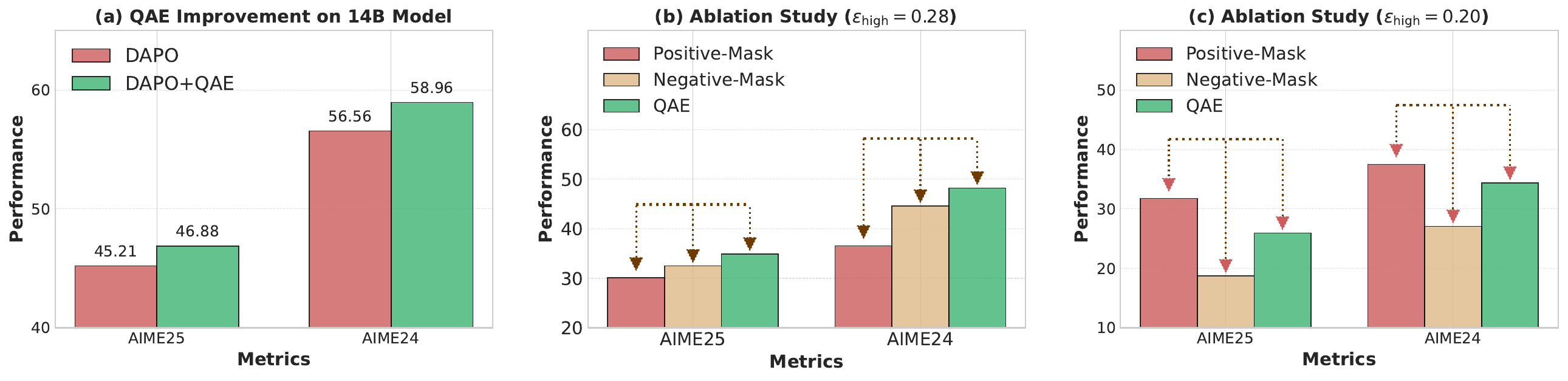}
    \caption{\textbf{Performance and ablations.}
    \textbf{(a)} QAE improves DAPO on the 14B model for both AIME’25 and AIME’24 (pass@1).
    \textbf{(b)} With weaker high-end clipping ($\epsilon_{\text{high}}{=}0.28$), controlling
    negative-advantage updates (\textsc{Neg-Mask}) is most critical, closely tracking full QAE.
    \textbf{(c)} With stronger clipping ($\epsilon_{\text{high}}{=}0.20$), positive-advantage
    control (\textsc{Pos-Mask}) dominates. 
    }
    \label{fig:improve14b}
    \vspace{-10pt}
\end{figure}



\subsection{Ablations \& Composition}
\label{sec:ablations}

\paragraph{Masking mechanisms.}
QAE can be viewed as selectively masking updates. To disentangle their roles, we define two one-sided objectives:



\vspace{-10pt}
\begingroup
\setlength{\abovedisplayskip}{6pt}%
\setlength{\belowdisplayskip}{6pt}%
\begin{equation}
\label{eq:J-positive-mask}
\mathcal{J}_{\text{POS-MASK}}(\theta)
= \mathbb{E}_q\bigl[
\mathbf{1}_{\{p(q)\le 1-K\}}\,\sqrt{\tfrac{p(q)}{1-p(q)}}\,\mathbb{E}_{o\sim\pi_{\text{old}}^{+}} s_\theta^{+}(o,q)
-\sqrt{\tfrac{1-p(q)}{p(q)}}\,\mathbb{E}_{o'\sim\pi_{\text{old}}^{-}} s_\theta^{-}(o',q)
\bigr].
\end{equation}

\begin{equation}
\label{eq:J-negative-mask}
\mathcal{J}_{\text{NEG-MASK}}(\theta)
= \mathbb{E}_q\bigl[
\sqrt{\tfrac{p(q)}{1-p(q)}}\,\mathbb{E}_{o\sim\pi_{\text{old}}^{+}} s_\theta^{+}(o,q)
-\mathbf{1}_{\{p(q)>1-K\}}\,\sqrt{\tfrac{1-p(q)}{p(q)}}\,\mathbb{E}_{o'\sim\pi_{\text{old}}^{-}} s_\theta^{-}(o',q)
\bigr].
\end{equation}
\endgroup


\paragraph{Masking mechanisms.}
QAE can be interpreted as masking \emph{positives} on easy queries and \emph{negatives} on hard queries. We isolate each
side by constructing two objectives: \textsc{Pos-Mask} (Eq.~\ref{eq:J-positive-mask}) and \textsc{Neg-Mask}
(Eq.~\ref{eq:J-negative-mask}), leaving the other side unmasked.

\paragraph{Explosion vs.\ collapse regimes.}
As shown in Fig.~\ref{fig:improve14b} (\textbf{b-c}), when the high-end clipping is \emph{weak}
($\epsilon_{\text{high}}{=}0.28$), the dominant failure mode is entropy explosion; \textsc{Neg-Mask} nearly matches
QAE and outperforms \textsc{Pos-Mask}. With \emph{strong} clipping ($\epsilon_{\text{high}}{=}0.20$), collapse pressure
dominates and the ordering flips (\textsc{Pos-Mask} $>$ \textsc{Neg-Mask}). This matches the two-regime analysis in
Sec.~\ref{sec:theory_two_regime}.



\section{Discussion}
\paragraph{$K$ as an entropy–guided exploration–exploitation knob.}
We use a single hyperparameter $K$ to control how many responses receive nonzero advantage, thereby steering the exploration–exploitation trade-off by modulating entropy.
\textit{Operational rule-of-thumb:} we select $K$ \emph{once per baseline} by inspecting the entropy of the \emph{baseline policy}, rather than the evaluation metric. When entropy is low (risk of mode collapse), choose $K\!=\!0.6$ to inject diversity; when entropy is high (risk of unstable updates), choose $K\!=\!0.4$ to temper exploration.
Since all our recipes use \texttt{Clip-Higher}, we default to $K\!=\!0.4$; finer-grained tuning can yield further gains.

\paragraph{QAE prioritizes \emph{who} learns over \emph{how much}.}
Updating only a small subset of samples ($\sim20\%$; Figure~\ref{fig:combined}\,(Right)) makes \textsc{RLVR} more stable and improves scaling behavior, indicating that \emph{selection}, not update magnitude, is the primary bottleneck.
QAE implements a binary reward with a quantile baseline at the \emph{query} level: for difficult queries it assigns credit to successes, whereas for easy queries it assigns credit to failures.
By adjusting the masking range—or by introducing dual masks—naïve \textsc{DAPO}/\textsc{GRPO} reduce to special cases, providing a safe fallback within the same framework.

\paragraph{Baseline design as a third knob for entropy control.}
Prior work has studied positive/negative ratios \citep{psr}, entropy dynamics \citep{Mechanism_of_RLVR}, and advantage shaping \citep{cheng2025reasoning}.
QAE is orthogonal: it uses \emph{baseline design}—a shift from the mean to the $K$-quantile—as the primary entropy lever and composes cleanly with existing techniques (Table~\ref{tab:main}).
Related to \citet{meta_baseline}, which analyzes tunable baselines in \textsc{REINFORCE}, our quantile baseline is a data-adaptive, group-level instantiation that improves robustness while preserving standard policy-gradient updates.

\section{Related Work}
\label{sec:related_work}
\paragraph{Reinforcement learning for LLM}
RL has become a key technique for eliciting advanced reasoning in large language models (LLMs), a paradigm shift from its earlier applications in preference alignment via RLHF~\citep{ouyang2022training}. This modern approach, termed Reinforcement Learning with Verifiable Rewards (RLVR)~\citep{tulu3,rebuttal_cite}, leverages outcome-based optimization to achieve state-of-the-art performance in complex domains like mathematics and programming. Seminal works, including OpenAI's o1~\citep{openaio1} and DeepSeek R1~\citep{R1}, demonstrated that RL can effectively scale reasoning capabilities, spurring a new line of research~\citep{qwen3, k1.5}.
Central to this progress are online, value-free algorithms that have generally outperformed offline preference optimization methods~\citep{dpo, beta-dpo, repo}. In particular, Group Relative Policy Optimization (GRPO)~\citep{grpo} and its successor, Dynamic Sampling Policy Optimization (DAPO)~\citep{dapo}, have emerged as foundational baselines for many contemporary reasoning systems~\citep{vapo, simplerl, orz}. Our work uses DAPO as a representative algorithm to investigate a critical, unresolved challenge in this domain: the training instability caused by dysregulated policy entropy, which limits the performance and scalability of current RLVR methods.

\paragraph{Exploration, entropy dynamics, and collapse/explosion in RLVR.}
Existing RLVR entropy research follows three strands: (i) \emph{Mechanistic} analyses identifying where exploration concentrates, such as high-entropy ``forking'' tokens \citep{wang2025highentropy} or ``thinking tokens'' \citep{qian2025thinking}, and the dynamics of sequence-level collapse or explosion \citep{Mechanism_of_RLVR}; (ii) \emph{Objective-level} regulation steering entropy via modified optimization targets \citep{agarwal2025unreasonable,psr,zhang2025maxent} or regularized MDP scheduling \citep{geist2019regularized,ahmed2019entropy,xu2021tes}; and (iii) \emph{Recipe/system-level} heuristics injecting exploration via advantage shaping \citep{cheng2025reasoning}, Pass@k training \citep{chen2025passk}, agentic scaffolds \citep{zhou2025ruscarl,shang2025rstar2}, and modulated gradients \citep{wang2025empg,song2025obe}. 
Despite these advances, a \emph{baseline-level} entropy control that is data-adaptive yet preserves standard policy-gradient updates remains missing. Our work fills this gap using a quantile baseline and binary masking, offering a drop-in lever that complements existing methods while explicitly targeting stability.

\section{Conclusion}
\label{sec:conclusion}
\paragraph{Conclusion}
We propose \emph{Quantile Advantage Estimation} (QAE), replacing the mean baseline with a group-wise $K$-quantile to implement a two-regime gate that amplifies rare successes and suppresses residual failures. Under first-order policy updates, QAE provides two-sided entropy control with bounded one-step entropy change, curbing both collapse and explosion. Empirically, QAE stabilizes entropy, sparsifies credit assignment, and improves pass@1 across reasoning benchmarks while composing cleanly with standard sequence- and token-level controls.
\paragraph{Limitations and Future Work}
(i) \textbf{Dynamic $K$:} Beyond a fixed $K$, explore simple schedules or two-phase curricula to better balance exploration and exploitation; 
(ii) \textbf{Automatic $K$:} Adapt $K$ to model state (\eg success rate, entropy, or gradient variance) to remove manual tuning; 
(iii) \textbf{PPO integration:} Embed the quantile-baseline idea into PPO’s whitening/normalization—\eg batch-wise quantile baselines—to test robustness across algorithms and scales.

\subsubsection*{Acknowledgments}\phantomsection\label{sec:ack}
This research is supported by the National Natural Science Foundation of China (U25A20445, 62572449, 62525211, 62302321). This research also benefited from the advanced computing resources provided by the Supercomputing Center of the USTC.


\bibliography{iclr2026_conference}
\bibliographystyle{iclr2026_conference}

\appendix
\newpage
\section{Proof}
\label{app:proof}
\subsection{Proof of Proposition~\ref{prop:qae-objective}}
\gradientanalysis*
\label{app:prop}
\begin{proof}
Write $p=p(q)$ for brevity. Recall the token-normalized surrogate
\begin{equation}
\label{eq:surrogate}
\mathcal{J}(\theta)\;=\;
\E_{q}\,\E_{o\sim\pi_0(\cdot\mid q)}\;
\frac{1}{|o|}\sum_{t=1}^{|o|}
f\!\left(\frac{\pi_\theta(o_t\mid q,o_{<t})}{\pi_0(o_t\mid q,o_{<t})},\,A(o\mid q)\right),
\end{equation}
and the positive/negative homogeneous scaling of $f$ (the same convention as in the main text):
\begin{equation}
\label{eq:homog}
f(x,c)=
\begin{cases}
c\,f^{+}(x,1), & c>0,\\[3pt]
|c|\big(-f^{-}(x,1)\big), & c<0,
\end{cases}
\qquad\Longleftrightarrow\qquad
f(x,-c)=-c\,f^{-}(x,1)\ \ (c>0).
\end{equation}

For the binary reward $r(o\mid q)\in\{0,1\}$ and the group statistics
$\E_{o\sim \pi_0(\cdot\mid q)} r(o\mid q)=p$ and
$\Var_{o\sim \pi_0(\cdot\mid q)} r(o\mid q)=p(1-p)$, the standardized advantage used in the paper takes the form
\begin{equation}
\label{eq:adv-std}
A(o\mid q)=
\begin{cases}
\phantom{-}\sqrt{\dfrac{1-p}{p}}, & r(o\mid q)=1,\\[6pt]
-\sqrt{\dfrac{p}{1-p}}, & r(o\mid q)=0.
\end{cases}
\end{equation}

Under the $K$-quantile baseline described in Section~4 (right-continuous), responses are masked asymmetrically by the regime of $p$:
\begin{align}
\label{eq:regimes}
\text{if } p\le 1-K:\quad
A^+(q)=\frac{1}{\sqrt{p(1-p)}},\ \ A^-(q)=0;\qquad \\
\text{if } p>1-K:\quad
A^+(q)=0,\ \ A^-(q)=-\frac{1}{\sqrt{p(1-p)}}.
\end{align}
Equivalently, among $\{r=1,r=0\}$ only one label contributes in each regime.

Plug \eqref{eq:regimes} into \eqref{eq:surrogate} and decompose over $r\in\{1,0\}$ (writing $\pi_0^{+}(\cdot\mid q)$ and $\pi_0^{-}(\cdot\mid q)$ for $\pi_0(\cdot\mid q)$ conditioned on $r=1$ and $r=0$, respectively):
\begin{align}
\mathcal{J}(\theta)
&=\E_q\Bigg[
\mathbf{1}\{p\le 1-K\}\; p\;
\E_{o\sim \pi_0^+(\cdot\mid q)}\frac{1}{|o|}\sum_{t}
f\!\left(\frac{\pi_\theta(o_t\mid q,o_{<t})}{\pi_0(o_t\mid q,o_{<t})},\ \frac{1}{\sqrt{p(1-p)}}\right)
\label{eq:step-split}
\\[-2pt]
&\hspace{0.0cm}
+\ \mathbf{1}\{p>1-K\}\; (1-p)\;
\E_{o\sim \pi_0^-(\cdot\mid q)}\frac{1}{|o|}\sum_{t}
f\!\left(\frac{\pi_\theta(o_t\mid q,o_{<t})}{\pi_0(o_t\mid q,o_{<t})},\ -\frac{1}{\sqrt{p(1-p)}}\right)
\Bigg].\nonumber
\end{align}
Apply the homogeneity \eqref{eq:homog} separately to the two terms in \eqref{eq:step-split}. For $p\le 1-K$ the scalar is positive, and for $p>1-K$ it is negative, hence
\begin{align}
\mathcal{J}(\theta)
&=\E_q\Bigg[
\mathbf{1}\{p\le 1\!-\!K\}\;\sqrt{\frac{p}{1-p}}\;
\E_{o\sim \pi_0^+(\cdot\mid q)}\frac{1}{|o|}\sum_{t}
f^{+}\!\left(\frac{\pi_\theta(o_t\mid q,o_{<t})}{\pi_0(o_t\mid q,o_{<t})},\,1\right)\label{eq:final-obj}\\
&\hspace{0.00cm}
-\ \mathbf{1}\{p>1\!-\!K\}\;\sqrt{\frac{1-p}{p}}\;
\E_{o\sim \pi_0^-(\cdot\mid q)}\frac{1}{|o|}\sum_{t}
f^{-}\!\left(\frac{\pi_\theta(o_t\mid q,o_{<t})}{\pi_0(o_t\mid q,o_{<t})},\,1\right)
\Bigg].\nonumber
\end{align}

Equation~\ref{eq:final-obj} is the claimed quantile-regulated objective: compared with the symmetric GRPO/DAPO weight $\sqrt{p(1-p)}$, the quantile baseline (i) \emph{masks} one side (positives on easy queries with $p>1-K$ or negatives on hard queries with $p\le 1-K$) and (ii) \emph{re-weights} the active side by the asymmetric factors $\sqrt{p/(1-p)}$ or $\sqrt{(1-p)/p}$.
This completes the proof.

\medskip\noindent\textbf{Instantiating $f$ for GRPO.}
For GRPO we use
\begin{align}
f^{+}(x,1)=\min\!\big(x,\mathrm{clip}(x,1-\epsilon,1+\epsilon)\big)=\min(x,1+\epsilon),\\
f^{-}(x,1)=\max\!\big(x,\mathrm{clip}(x,1-\epsilon,1+\epsilon)\big)=\max(x,1-\epsilon),
\end{align}

which can be plugged into \eqref{eq:final-obj} directly.
\end{proof}

\subsection{Proof of Proposition~\ref{prop:two_regime}}
\label{app:prop}
\tworegime*
\begin{proof}
Fix $q$ and a non-uniform softmax policy $\pi(\cdot\mid q)$. For any baseline $b\in[0,1]$ and binary reward $r\in\{0,1\}$, write
\[
A_b(y,q)=r(y,q)-b,\qquad 
F_q(b):=\Cov_{y\sim\pi(\cdot\mid q)}\!\bigl(\log\pi(y\mid q),\ \pi(y\mid q)\,(r(y,q)-b)\bigr).
\]
The entropy–covariance identity for softmax policies under first-order logit updates (adapted from \citet{Mechanism_of_RLVR}) gives
\begin{equation}\label{eq:ent-cov-proof}
\Delta H(q;b)\;\approx\;-\eta\,F_q(b),\qquad \eta>0 .
\end{equation}

\noindent\textbf{Step 1: Baseline monotonicity.}
By bilinearity of covariance,
\begin{equation}\label{eq:F-affine}
F_q(b)
=\Cov_{\pi}\!\bigl(\log\pi,\ \pi r\bigr) - b\,\Cov_{\pi}\!\bigl(\log\pi,\ \pi\bigr)
=: F_q(0)-b\,C_q .
\end{equation}
Let $U:=\pi(Y\mid q)$ for $Y\sim\pi(\cdot\mid q)$. Then $C_q=\Cov(\log U,\,U)$. Since $u\mapsto \log u$ and $u\mapsto u$ are strictly increasing on $(0,1]$, they are co-monotone; hence $\Cov(\log U,U)>0$ whenever $U$ is non-constant, i.e., whenever $\pi(\cdot\mid q)$ is non-uniform (see, \eg Chebyshev’s sum / rearrangement inequality \citep{hardy1952inequalities}). Therefore $C_q>0$ and \eqref{eq:F-affine} shows that $F_q(b)$ is strictly decreasing in $b$, so by \eqref{eq:ent-cov-proof} the entropy change $\Delta H(q;b)$ is strictly \emph{increasing} in $b\in[0,1]$.

\noindent\textbf{Step 2: Two-regime extremality of the $K$-quantile baseline.}
For Bernoulli rewards with success rate $p(q)$, the $K$-quantile baseline is
\[
b_K(q)=\begin{cases}
0,& p(q)\le 1-K,\\
1,& p(q)>1-K,
\end{cases}
\qquad\text{(Eq.~\ref{eq:bern-quantile}).}
\]
Because $\Delta H(q;b)$ increases in $b$ (Step~1), we have, for any $b\in[0,1]$,
\[
p(q)\le 1-K\ \Rightarrow\ b_K(q)=0=\min[0,1]\ \Rightarrow\ \Delta H(q;b_K)\le \Delta H(q;b),
\]
\[
p(q)> 1-K\ \Rightarrow\ b_K(q)=1=\max[0,1]\ \Rightarrow\ \Delta H(q;b_K)\ge \Delta H(q;b).
\]
Strict inequalities hold whenever $\pi(\cdot\mid q)$ is non-uniform and $b\neq b_K(q)$. These are exactly Items~(1) and~(2) of Proposition~\ref{prop:two_regime}.

\noindent This establishes the claimed \emph{two-regime entropy safety}: in the low-success regime ($p\le 1-K$) the quantile choice $b_K=0$ minimizes the entropy increment (explosion-proof), whereas in the high-success regime ($p>1-K$) the choice $b_K=1$ maximizes it (collapse-proof).
\end{proof}

\section{Experiments}
\subsection{Implementation Details}
\label{app:impelment}

\textbf{Experimental Setup:} Our configuration includes clip-higher, dynamic sampling, token-level policy gradient loss, and overlong reward shaping, as proposed in DAPO. We use the recommended hyperparameters: $\epsilon_{\text{high}}=0.28$ and $\epsilon_{\text{low}}=0.2$ for clip-higher, and a maximum response length of 20,480 with a 4,096-token cache for reward shaping.

\textbf{Training Details:} We train with a global batch size of 512, using 16 gradient accumulation steps with a mini-batch size of 32. The learning rate is fixed at $10^{-6}$ with no warmup or decay schedule. Importantly, we exclude both KL divergence and entropy losses.

\textbf{Evaluation:} To analyze scaling effects, we apply this method to the Qwen3-32B and Qwen3-8B base models, training them on the DAPO-Math-17K dataset~\citep{dapo}.

\textbf{Additional Experiments:} We also conduct a cold-start experiment with the GSPO algorithm, initializing from the Qwen3-30B-A3B-Base model. In this configuration, we use four gradient accumulation steps per batch. The GSPO clipping ranges are set to $3 \times 10^{-4}$ (left) and $4 \times 10^{-4}$ (right), aligning with the official VERL implementation script\footnote{\url{https://github.com/volcengine/verl/blob/main/recipe/gspo/test_gspo_3b_math.sh}}.

\begin{figure}[t!]  
    \centering  
    \includegraphics[width=0.4\columnwidth]{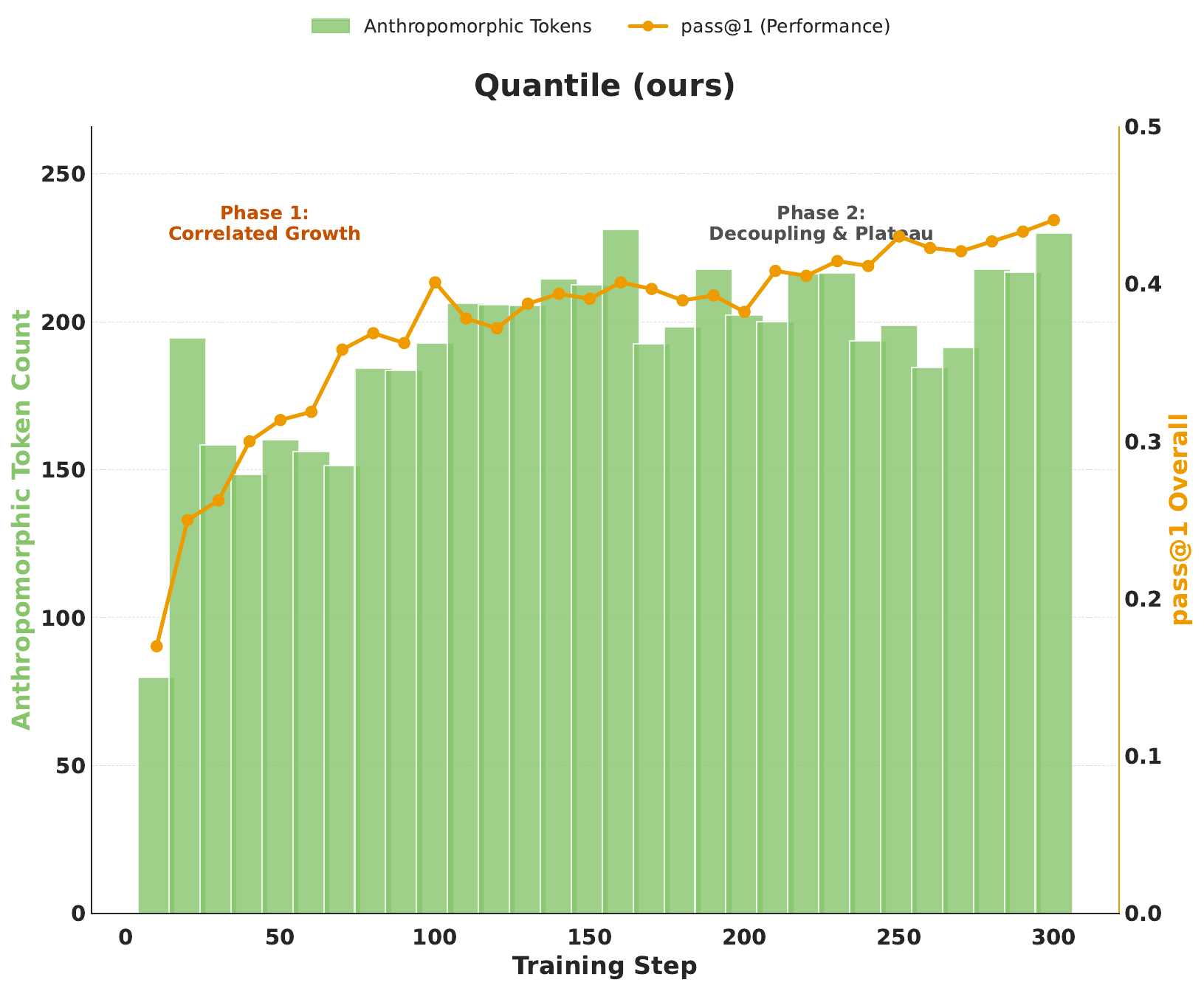}
    \caption{
        \textbf{High-entropy token diagnostics under QAE.}
Green bars: counts of anthropomorphic high-entropy tokens; orange line: overall pass@1. 
Early coupled growth transitions to later decoupling—token counts plateau while accuracy improves—indicating entropy-safe, selective exploration.
        }
    \label{fig:quantile_token_count}
    \vspace{-10pt} 
\end{figure}

\subsection{More Experiments}
\label{app:more_exp}
\paragraph{QAE sustains co-growth of ``aha'' markers and accuracy.}
Contrasting with \texttt{Clip-Higher}, Fig.~\ref{fig:quantile_token_count} shows that under QAE the anthropomorphic token count \emph{and} pass@1 rise together across training. 
From early to late steps, the green bars (``aha'' markers) increase and remain elevated, while the orange curve improves monotonically, indicating that exploration is converted into productive reasoning rather than unchecked entropy. 

\begin{figure}[t]
    \centering
    \includegraphics[width=\columnwidth]{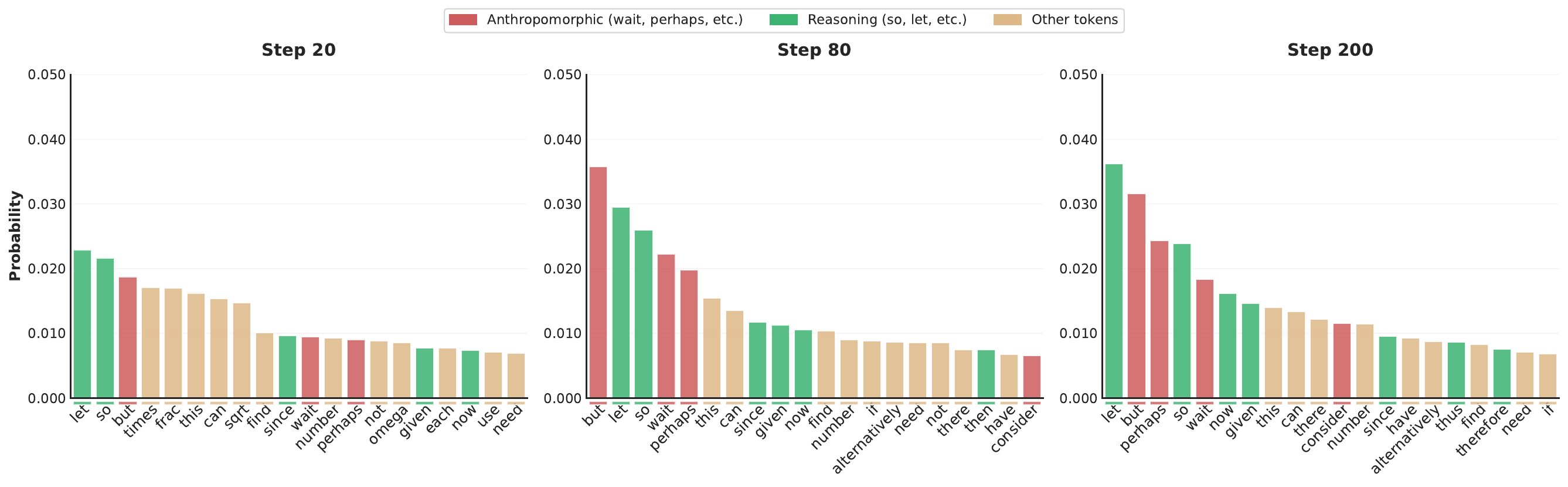}
    \caption{\textbf{Token-level diagnostics.}
    Probability mass over top high-entropy tokens at different training steps.
    Under QAE, exploratory tokens increase in a controlled manner, aligning with the stable-entropy regime in Fig.~\ref{fig:combined}b.}
    \label{fig:entropy_tokens_ours}
\end{figure}
\paragraph{High-entropy token diagnostics under QAE (fine-grained snapshots).}
A finer-grained inspection at representative steps—\textbf{20}/\textbf{80}/\textbf{200} in Fig.~\ref{fig:entropy_tokens_ours}—corroborates this interpretation. 
At step~20, anthropomorphic markers are sparse, consistent with exploration just being activated; by step~80, these tokens separate more distinctly, aligning with the performance uptick seen in the coupled-growth regime; by step~200, their counts stabilize despite continued pass@1 gains, evidencing a shift from ``more randomness'' to \emph{targeted} refinement. 
Taken together with the trajectory view, these snapshots confirm that QAE leverages high-entropy branches when beneficial and then curbs their proliferation once they cease to deliver marginal utility.

\begin{figure}[t]
    \centering
    \includegraphics[width=0.95\linewidth]{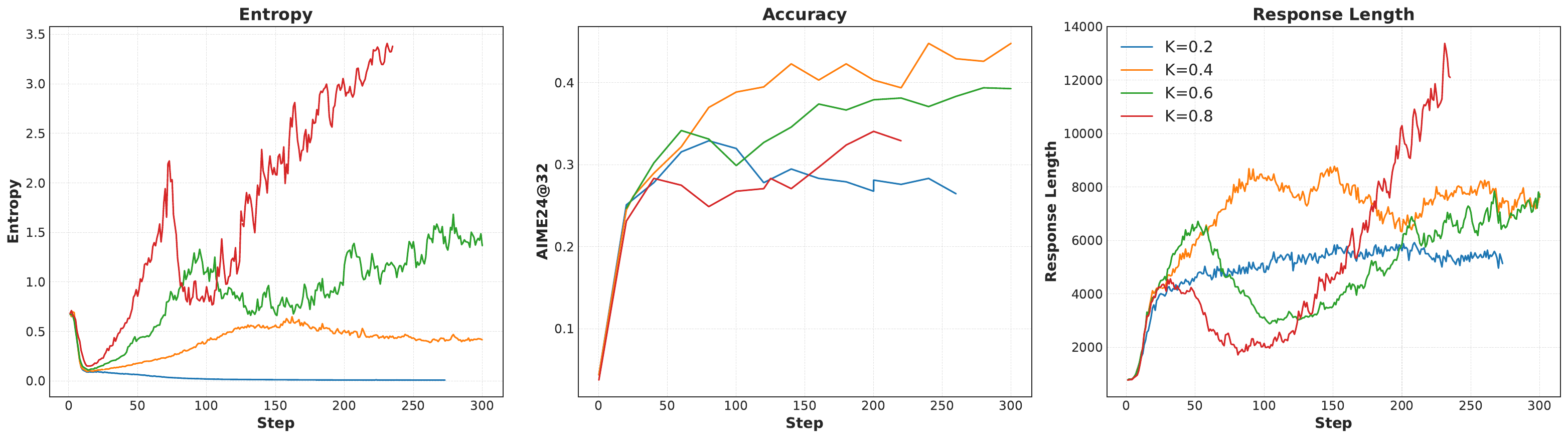}
    \caption{Training curves under different $K$ on Qwen3-8B-Base. Left: entropy; middle: accuracy (AIME24@32); right: response length.}
    \label{fig:k_sweep}
\end{figure}

\subsection{Quantile Parameter Analysis}
\label{app:k-analysis}

\paragraph{Trade-offs governed by $K$.}
Figure~\ref{fig:k_sweep} (left/middle/right) shows how the quantile $K$ tunes entropy, accuracy, and response length for Qwen3-8B-Base. Large $K$ (e.g., $0.8$) marks most samples as negative-advantage, driving entropy upward, inflating response length, and yielding volatile training with an early accuracy plateau. Small $K$ (e.g., $0.2$) marks most samples as positive-advantage, producing a low-entropy, over-regularized regime that is stable but exploration-poor, with limited accuracy gains. These trends align with Sec.~\ref{sec:method}: $K$ simultaneously sets the share of responses updated and the direction of entropy flow.

\paragraph{Stability at $K\!=\!0.4$ (with \texttt{Clip-Higher}).}
All main experiments use $K\!=\!0.4$ with $\epsilon_{\text{high}}\!=\!0.28$. This configuration avoids the high-entropy instability seen at $K\!=\!0.8$ while maintaining sufficient stochasticity to prevent collapse. Empirically it yields bounded entropy (Fig.~\ref{fig:k_sweep}, left), stable lengths (right), and sustained accuracy improvements (middle), striking a robust exploration–exploitation balance and matching the two-sided entropy safety predicted by our analysis.

\subsection{Analysis of Training Dynamics on 8B and 14B Models}
\label{app:8b_14b}

\paragraph{QAE stabilizes entropy and sustains performance gains across model scales.}
We compare the baseline DAPO with DAPO+QAE on Qwen3-8B-Base (Figure~\ref{fig:8b}) and Qwen3-14B-Base (Figure~\ref{fig:14b}). Across both model sizes, QAE consistently reduces and stabilizes policy entropy, keeps response length bounded, and yields smoother, longer-lasting accuracy improvements.

On Qwen3-8B, the mean-baseline variant exhibits a pronounced entropy surge around step~100, accompanied by divergence in response length and a subsequent accuracy plateau. In contrast, QAE maintains entropy within a productive range throughout training and avoids the plateau, leading to sustained accuracy gains in later stages.

The same pattern appears on Qwen3-14B. Although the entropy spike under the baseline is less severe, its entropy remains higher and more volatile than with QAE. QAE again moderates entropy and response length and produces a smoother, more monotonic accuracy trajectory. Taken together, these results indicate that QAE addresses the sensitivity of the mean baseline in value-free RL training and that principled baseline design provides an effective mechanism for scale-robust entropy control in RLVR.

\begin{figure}[t]
    \centering
    \includegraphics[width=0.95\linewidth]{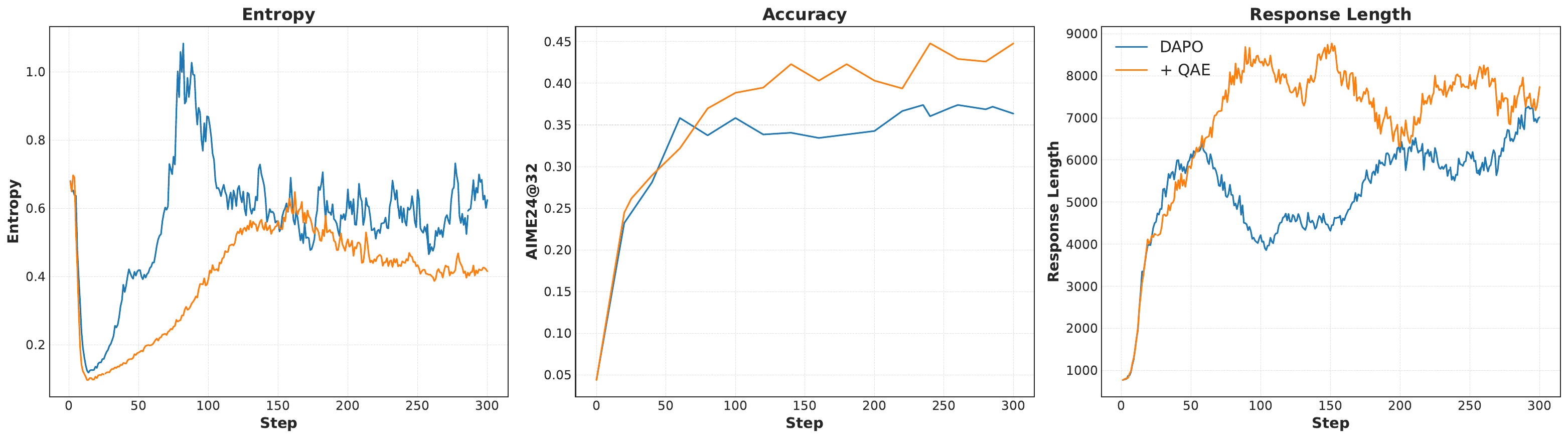}
    \caption{Training curves under DAPO and DAPO + QAE on Qwen3-8B-Base. Left: entropy; middle: accuracy (AIME24@32); right: response length.}
    \label{fig:8b}
\end{figure}

\begin{figure}[t]
    \centering
    \includegraphics[width=0.95\linewidth]{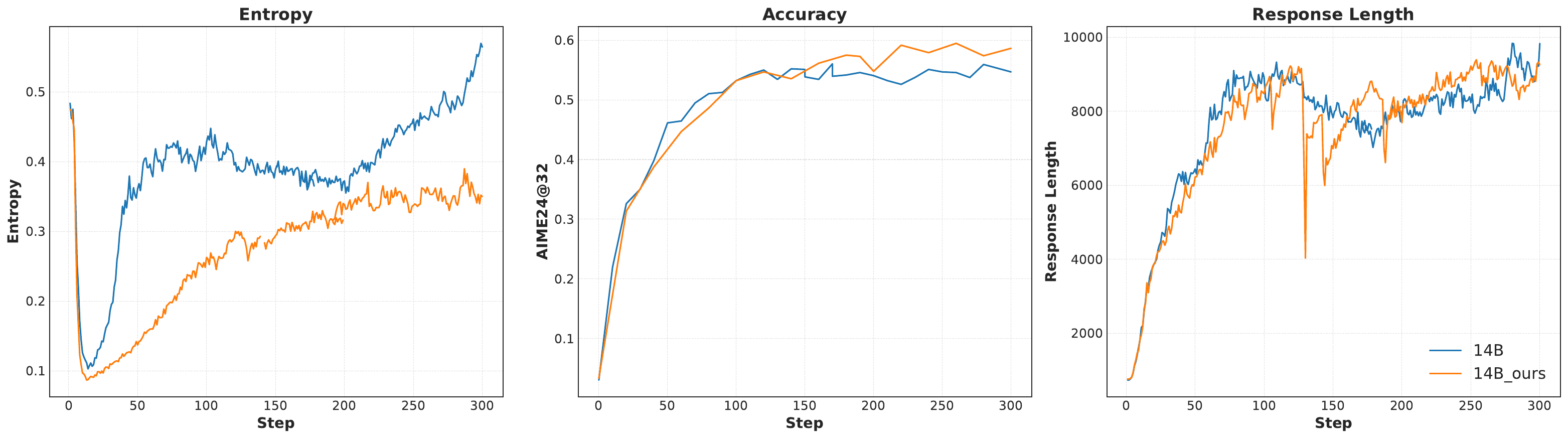}
    \caption{Training curves under DAPO and DAPO + QAE on Qwen3-14B-Base. Left: entropy; middle: accuracy (AIME24@32); right: response length.}
    \label{fig:14b}
\end{figure}

\end{document}